\title{Relational Abstractions for Generalized Reinforcement Learning \\on Symbolic Problems}
\author{
Rushang Karia\and
Siddharth Srivastava
\affiliations
School of Computing and Augmented Intelligence\\
Arizona State University
\emails
\{Rushang.Karia, siddharths\}@asu.edu
}
\DeclareMathOperator*{\argmax}{argmax}
\newtheorem{theorem}{Theorem}[section]
\newcommand{\citet}[1]{\citeauthor{#1}~[\citeyear{#1}]}
\begin{document}

\maketitle

\begin{abstract}
Reinforcement learning in problems with symbolic state spaces is a challenging problem due to the need for reasoning over long horizons. This paper presents a new approach that utilizes relational abstractions in conjunction with deep learning to learn a generalizable Q-function for such problems. The learned Q-function can be efficiently transferred to related problems that have different object names and object quantities, and thus, entirely different state spaces. We show that the learned generalized Q-function can be utilized to speed up learning on related problems without an explicit, hand-coded curriculum. Empirical evaluations on a range of problems show that our method facilitates efficient transfer of learned knowledge to much larger problem instances containing many objects, often reducing the sampling requirements on larger problems by several orders of magnitude.
\end{abstract}

\section{Introduction}
\label{sec:introduction}
Deep Reinforcement Learning (DRL) has been successfully used for sequential decision making in tasks using image-based state representations \cite{DBLP:journals/corr/MnihKSGAWR13}. However,
many problems in the real world cannot be readily expressed as such and are naturally described by using factored representations in a symbolic representation language such as PDDL or RDDL \cite{DBLP:journals/jair/LongF03,Sanner:RDDL}. For example, in a logistics problem, the objective consists of delivering packages to their destined locations using a truck to carry them. Symbolic description languages such as first-order logic (FOL) can easily capture states and objectives of this scenario using predicates such as \emph{in-truck($p$)} where $p$ is a parameter that can be used to represent any package. Symbolic representations for such problems are already available in the form of databases and converting them to image-based representations would require significant human effort. Due to their practical use, symbolic descriptions and algorithms utilizing them are of keen interest to the research community.

A key difficulty in applying RL to problems expressed in such representations is that their state spaces generally grow exponentially as the number of state variables or objects increases. However, solutions to such problems can often be described by compact, easy-to-compute ``generalized policies" that can transfer to a class of problems with differing object counts, significantly reducing the sample complexity for learning a good, instance-specific policy. 

\textbf{Running Example} We illustrate the benefits of computing generalized policies using the Sysadmin$(n)$ domain that has been used in many planning competitions. A problem in this domain consists of a set of $n$ computers connected to each other in an arbitrary configuration. At any timestep, the computers can shutdown with an unknown probability distribution that depends on the network connectivity. The agent is also awarded a positive reward that is proportional to the total number of computers that are up. Similarly, at each timestep, the agent may reboot any one of the $n$ computers bearing a small negative reward or simply do nothing. In our problem setting, action dynamics are not available as closed-form probability distributions, making RL the natural choice for solving such problems.

A state in this problem is succintly described by a factored representation with boolean state variables (propositions) that describe which computers are running and their connectivity. It is easy to to see that the state spaces grow exponentially as $n$ increases. However, this problem has a very simple policy that can provide a very high reward; reboot any computer that is not running or do nothing. Even though a general policy for such a problem is easy to express, traditional approaches to RL like Q-learning cannot transfer learned knowledge and have difficulties scaling to larger problems with more computers. Our major contribution in this paper is learning a \emph{generalized, relational Q-function} that can express such a policy and use it to efficiently transfer policies to larger instances, reducing the sample complexity for learning.

Many existing techniques that compute generalized policies do so by using human-guided or automatic feature engineering to find relevant features that facilitate efficient transfer (see Sec.\,\ref{sec:related_work} for a detailed discussion of related work). For example, \citet{DBLP:conf/aips/NgP21} use an external feature discovery module to learn first-order features for Q-function approximation. API \cite{DBLP:journals/jair/FernYG06} uses a taxonomic language with beam search to form rule-based policies. 

In this paper, we approach the problem of learning generalized policies from a Q-function approximation perspective. We utilize deep learning along with an automatically generated feature list to learn a nonlinear approximation of the Q-function. Our approach learns a generalizable, relational Q-function  that facilitates the transfer of knowledge to larger instances at the propositional level. Our empirical results show that our approach can outperform existing approaches for transfer (Sec.\,\ref{sec:empirical_evaluation}).

The rest of the paper is organized as follows. The next section presents the required background. Sec.\,\ref{sec:our_approach} describes our approach for transfer followed by a description of using our algorithm for generalized reinforcement learning (Sec.\,\ref{sec:grl}).  Sec.\,\ref{sec:empirical_evaluation} presents an extensive empirical evaluation along with a discussion of some limitations of our approach. We then provide an account of related work in the area (Sec.\,\ref{sec:related_work}) and conclude by summarizing our contributions (Sec.\,\ref{sec:conclusion}).

\section{Formal Framework}
\label{sec:formal_framework}
We establish our problem in the context of reinforcement learning for Markov Decision Processes (MDPs). Adapting \citet{DBLP:journals/jair/FernYG06}, we represent relational MDPs as follows: Let $D = \langle \mathcal{P}, \mathcal{A} \rangle$ be a problem domain where $\mathcal{P}$ is a set of predicates of arity no greater than 2, and $\mathcal{A}$ is a set of parameterized action names. An MDP for a domain $D$ is a tuple 
$M = \langle O, D, S, A, T, R, \gamma, s_0\rangle$ where $O$ is a set of objects. A fact is an instantiation of a predicate $p \in \mathcal{P}$ with the appropriate number of objects from $O$. A state $s$ is a set of true facts and the state space $S$ is a finite set consisting of all possible sets of true facts. Similarly, the action space $A$ is composed of all possible instantiations of action names $a \in \mathcal{A}$ with objects from $O$. $T$ is a transition system, implemented by a simulator, that returns a state $s'$ according to some fixed, but \emph{unknown} probability distribution $P(s'|s, a)$ when applying action $a$ in a state $s$. We assume w.l.o.g. that the simulator only returns actions that are executable in a given state and that there is always one such action (which can be easily modeled using a \emph{nop}). $R: S \times A \rightarrow \mathbb{R}$ is a reward function that is also implemented by the simulator. $\gamma$ is the discount factor, and $s_0$ is the initial state. \\
\textbf{Example} The Sysadmin domain introduced in the preceding section can be described by predicates running$(c_x)$ and link$(c_x,c_y)$. The possible actions are reboot$(c_x)$, and nop$()$. $c_x$ and $c_y$ are parameters that can be grounded with objects of a specific problem. A state of a problem $M_\emph{eg}$ drawn from Sysadmin$(2)$ with connectivity $K_2$ using computer names $c_0$ and $c_1$ where only $c_0$ is up can be described as
$s_\text{eg} = \{ \text{running}(c_0), \text{link}(c_0, c_1), \text{link}(c_1, c_0) \}$. The action space of $M_\emph{eg}$ would consist of actions nop$()$, reboot$(c_0)$, and reboot$(c_1)$, with their dynamics implemented by a simulator.

A solution to an MDP is expressed as a deterministic \emph{policy} $\pi: S \rightarrow A$, which is a mapping from states to actions.
Let $t$ be any time step, then, given a policy $\pi$, the value of taking action $a$ in a state $s$ is defined as the expected return starting from $s$, executing $a$, observing a reward $r$ and following the policy thereafter \cite{DBLP:books/lib/SuttonB98}.

\begin{equation*}
    q_\pi(s, a) = \mathbb{E}_\pi \bigg[ \sum_{i=0}^{\infty} \gamma^i r_{t+i+1} \bigg| s_t=s, a_t=a \bigg]
\end{equation*}

The optimal action-value function (or Q-function) is defined as the maximum expected return over all policies for every $s \in S$ and every $a \in A$; $q_*(s, a) = \max\limits_\pi q_\pi(s, a)$. It is easy to prove that the optimal Q-function function satisfies the \emph{Bellman} equation (expressed in action-value form):
\begin{equation*}
    q_*(s, a) = \mathbb{E}_{s'\sim T} \bigg[ r_{t+1} + \gamma \max\limits_{a' \in A} q_*(s', a') \bigg| s_t=s, a_t=a \bigg]
\end{equation*}

Reinforcement learning algorithms iteratively improve the Q-function \emph{estimate} $Q(s, a) \approx q_*(s, a)$ by converting the Bellman equation into update rules. Given an observation sequence $(s_t, a_t, r_{t+1}, s_{t+1})$, the update rule for Q-learning \cite{Watkins:1989} to estimate $Q(s_t, a_t)$ is given by:
\begin{equation*}
    Q(s_t, a_t) = Q(s_t, a_t) 
    + \alpha \delta_t
\end{equation*}
where $\delta_t = r_{t+1} + \gamma \max\limits_{a' \in A} Q(s_{t+1}, a') - Q(s_t, a_t) $ is the temporal difference, TD$(0)$, error, and $\alpha$ is the learning rate. Q-learning has been shown to converge to the optimal Q-function under certain conditions \cite{DBLP:books/lib/SuttonB98}. Q-learning is an \emph{off-policy} algorithm and generally uses an $\epsilon$-greedy exploration strategy, selecting a random action with probability $\epsilon$, and following the greedy policy  $\pi(s) = \argmax_{a}Q(s, a)$ otherwise.

Let $f$ be a feature, we then define a feature kernel $\phi_f(s)$ as a function that maps a state $s$ to a set of objects.
We utilize description logic to derive and express feature kernels building upon the recent work by \citet{DBLP:conf/aaai/BonetFG19}. This is described in Sec.\,\ref{subsec:abstraction}.

\section{Our Approach}
\label{sec:our_approach}
Our goal is to compute approximate Q-functions whose induced policies generalize to problem instances with differing object counts in a way that allows
RL approaches to find good policies with minimal learning. To do so, we use the sampled state space of a small problem to automatically generate domain-specific relational abstractions that lift problem-specific characteristics like object names and numbers (Sec.\,\ref{subsec:abstraction}). Sec.\,\ref{subsec:deep_learning_for_value_approximation} describes our method of representing these abstractions as input features to a deep neural network. Finally, Sec.\,\ref{sec:grl} expands on how our algorithm, Generalized Reinforcement Learning (GRL), utilizes the deep neural network to learn approximate Q-values of abstract states and use them for transfer learning.

\subsection{Relational Abstraction}
\label{subsec:abstraction}
One challenge in Q-function approximation is the use of a representation language from which it is possible to extract features that can provide useful information for the decision making process.  We now provide a formal description of the general classes of abstraction-based, domain-independent feature synthesis algorithms that we used in this paper.

\textbf{Description Logics (DL)} are a family of representation languages popularly used in knowledge representation \cite{DBLP:books/daglib/dlbook}. Building upon many
different threads of research in learning and generalization for planning \cite{DBLP:journals/apin/MartinG04,DBLP:journals/jair/FernYG06,DBLP:conf/aaai/BonetFG19,DBLP:conf/aaai/FrancesBG21}, we chose DLs due to the diversity of features they can represent while providing a good balance between expressiveness and tractability. 

In the relational MDP paradigm, unary predicates $\mathcal{P}_1 \in \mathcal{P} $ and binary predicates $\mathcal{P}_2 \in \mathcal{P}$ of a domain $D$ can be viewed as \emph{primitive} \emph{concepts} $C$ and \emph{roles} $R$ in DL. DL includes constructors for generating \emph{compound} concepts and roles from primitive ones to form expressive languages.
Our feature set $F_{\emph{DL}}$ consists of concepts and roles formed by using a reduced set of grammar from \citet{DBLP:conf/aaai/BonetFG19}:
\begin{linenomath}
\begin{align*}
C, C' &\rightarrow \mathcal{P}_1 \mid \neg C \mid C \sqcap C' \mid \forall R.C \mid \exists R.C \mid R = R'\\
R, R' &\rightarrow \mathcal{P}_2 \mid R^{-1}
\end{align*}
\end{linenomath}

where $\mathcal{P}_1$ and $\mathcal{P}_2$ represent the primitive concepts and roles, and $R^{-1}$ represents the inverse.
$\forall R.C = \{ x \mid \forall y \text{ } R(x,y) \land C(y)  \}$ and 
$\exists R.C = \{ x \mid \exists y \text{ } R(x,y) \land C(y)  \}$. $R = R'$ denotes $\{x \mid \forall y \text{ } R(x, y) = R'(x, y)\}$. We also use $\emph{Distance}(c_1, r, c_2)$ features  \cite{DBLP:conf/aaai/FrancesBG21} that compute the minimum number of $r$-steps between two objects satisfying concepts
$c_1$ and $c_2$ respectively.

We control the total number of features generated by only considering features up to a certain complexity $k$ (a tunable hyperparameter) that is defined as the total number of grammar
rules required to generate a feature. \\
\textbf{Example} The primitive concepts and roles of the Sysadmin domain are running$(c_x)$ and link$(c_x, c_y)$ respectively. For the running example $M_\emph{eg}$, a feature $f_\emph{up} \equiv \text{running}(c_x)$ evaluates to the set of objects satisfying it, i.e., $\phi_{f_\emph{up}}(s_\emph{eg}) = \{ c_0 \}$. This feature can be interpreted as tracking the set of computers that are running (or up). It is easy to see that DL features such as  ${f_\emph{up}}$ capture relational properties of the state and can be applied to problems with differing object names and quantities.

It is clear that when using the DL features defined above, $\phi_f(s)$ evaluates to a set of objects that satisfy $f$. Thus, both $|\phi_f(s)|$ and $o \in \phi_f(s)$ are well-defined. The only exceptions are \emph{Distance} features that directly return numerical values. We simply define $|\phi_f(s)| = \phi_f(s)$ and $\forall o \in O, o \in \phi_f(s) = 0$ for such distance-based features.

We used the D2L system \cite{DBLP:conf/aaai/FrancesBG21} for generating such DL based features. We describe our use of the D2L system in more detail in Sec.\,\ref{subsec:tasks}.

\subsection{Deep Learning for Q-value Approximation}
\label{subsec:deep_learning_for_value_approximation}
Given a set of DL features $F$, another key challenge is to identify a subset $F' \subseteq F$ of features that can learn a good approximation of the Q-function.
We use deep learning utilizing the entire feature set $F$ for Q-value estimation.

Given a feature set $F$ and a domain $D$, the input to our network is a vector of size $|F| + |\mathcal{A}| +N \times |F|$ where $|\mathcal{A}|$ is the total number of actions in the domain, and $N$ is the maximum number of parameters of any action in $\mathcal{A}$. 

\begin{figure}[ht]
    \centering
    \includegraphics[width=\columnwidth]{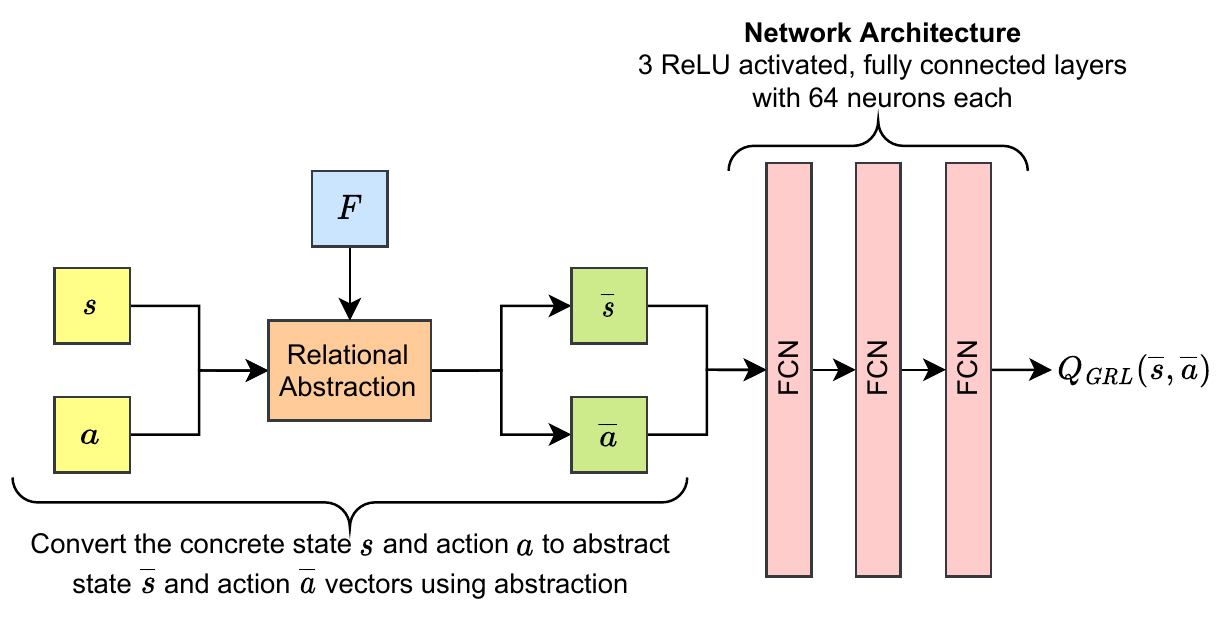}
    \caption{Our process for estimating Q-values.}
    \label{fig:network_architecture}
\end{figure}

Given a concrete state $s$, the \emph{abstract state feature vector} is defined as 
$\overline{s} = \langle |\phi_{f_1}(s)|, \ldots, |\phi_{f_n}(s)| \rangle$ for all features $f_i \in F$. Similarly, given a parameterized action $a(o_1, \ldots, o_n)$, the \emph{abstract action feature vector} is defined as a vector $\overline{a} = \langle A_\emph{name} | F_{o_1} | \ldots | F_{o_N} \rangle$ where $A_\emph{name}$ is a one-hot vector of length $|\mathcal{A}|$ encoding the action name $a$, $F_{o_i}$ is a vector of length
$|F|$ encoded with values $1_{[o_i \in \phi_{f_j}(s)]}$ for every feature $f_j \in F$, and $|$ represents vector concatenation. Together, $\overline{s}$ and $\overline{a}$ comprise the input to the network. \\
\textbf{Example} Let $f_\emph{up} \equiv \text{running}(c_x)$ and $F_\emph{DL} = \{ f_\emph{up} \}$ for the Sysadmin domain. Then, the abstract state vector $\overline{s_\emph{eg}}$ for the concrete state $s_\emph{eg}$ of the running example would be $\langle 1 \rangle$ and it indicates an abstract state with one computer running. The same vector would be generated for a state where computer $c_1$ was running instead of $c_0$.
Assuming actions are indexed in alphabetical order, nop$()$ would be encoded as $\langle 1, 0 | 0 \rangle$. Similarly, reboot$(c_0)$ would be encoded as $\langle 0, 1 | 1 \rangle$.

Fig.\,\ref{fig:network_architecture} illustrates our process for estimating the Q-values. Given a concrete state $s$
and action $a$, our network (that we call $Q_\emph{GRL}$) predicts the estimated Q-value $Q_\emph{GRL}(\overline{s}, \overline{a}) \approx q_*(s, a)$ by converting $s$ and $a$ to abstract state $\overline{s}$ and action $\overline{a}$ feature vectors based on the feature set $F$. Since the dimensionality of the abstract state and action vectors are fixed and do not depend upon a specific problem instance, the same network can be used to predict Q-values for states and actions across problem instances. This is the key insight into our method for transfer.

The abstract state captures high-level information about the state structure, whereas the abstract action captures the membership of the instantiated objects in the state, allowing our network to learn a generalized, relational Q-function that can be transferred across different problem instances.

\subsection{Generalized Reinforcement Learning}
\label{sec:grl}

Intuitively, Alg.\,\ref{alg:grl} presents our approach for Generalized Reinforcement Learning (GRL). For a given MDP $M$, an initial $Q_\emph{GRL}$ network, and
a set of features $F$, GRL works as follows: Lines $1-5$ transfer knowledge from the  $Q_\emph{GRL}$ network by converting every concrete state $s$ and action $a$ to
abstract state $\overline{s}$ and abstract action $\overline{a}$ vectors using the approach in Sec.\,\ref{subsec:deep_learning_for_value_approximation}. Next, every concrete Q-table entry $Q(s, a)$ is initialized with the predicted value $Q_\emph{GRL}(\overline{s}, \overline{a})$. Note that the Q-table for different problems $M' \not=M$ are different since their state and action spaces are different, however, $\overline{s}$ and $\overline{a}$ are fixed-sized vector representations of any state and action in these problems. This allows $Q_\emph{GRL}$ to transfer knowledge to any problem $M$ with any number of objects.

Lines $9-12$ do Q-learning on the problem $M$ to improve the bootstrapped policy further. Lines $13-16$ further improve the generalization capabilities of  $Q_\emph{GRL}$
by incorporating any policy changes that were observed while doing Q-learning on $M$. GRL returns the task specific policy $Q$ and the updated generalized policy $Q_\emph{GRL}$.

\textbf{Optimization} Lines $2-5$ can be intractable for large problems. We optimized transfer by only initializing entries in a lazy evaluation fashion. An
added benefit is that updates to $Q_\emph{GRL}$ for any abstract state can be easily reflected when encountering a new state that maps to the same abstract state.

\begin{theorem}
Solving problem M using GRL converges under standard conditions of convergence for Q-learning.
\end{theorem}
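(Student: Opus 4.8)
The plan is to reduce the behaviour of GRL on a fixed problem $M$ to ordinary tabular Q-learning on $M$ and then invoke the classical Q-learning convergence theorem \cite{Watkins:1989,DBLP:books/lib/SuttonB98}. The starting observation is that $M$ is a finite MDP --- $S$ and $A$ are finite by construction and $R$ is bounded --- so every hypothesis of the classical theorem except those concerning the learning schedule is automatically available. The one conceptual fact we rely on is that the limit of Q-learning does not depend on the initial Q-table: from \emph{any} bounded initialization the iterates converge almost surely to $q_*$. Consequently the only thing GRL's bootstrapping step (Lines $1$--$5$) can affect is the transient behaviour, not the fixed point.

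Next I would discharge the remaining hypotheses. Boundedness of the initial table: $Q_\emph{GRL}$ is a fixed neural network evaluated on the finitely many abstract vectors $\overline{s}, \overline{a}$ induced by the finitely many concrete pairs $(s,a)$ of $M$; a finite collection of real outputs is bounded, so the bootstrapped entries $Q(s,a) = Q_\emph{GRL}(\overline{s},\overline{a})$ are uniformly bounded. The remaining requirements --- Robbins--Monro step sizes ($\sum_t \alpha_t = \infty$ and $\sum_t \alpha_t^2 < \infty$), infinitely many visits to every state--action pair (the role of the $\epsilon$-greedy exploration that Q-learning already performs), and $\gamma < 1$ --- are precisely the ``standard conditions'' assumed in the statement, and they are untouched by GRL because Lines $9$--$12$ run the unmodified TD$(0)$ update on $M$. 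The classical theorem then yields $Q \to q_*$ on $M$, hence an optimal induced greedy policy.

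Finally I would account for the two optimizations so that the reduction is exact. Under lazy evaluation each concrete entry $Q(s,a)$ is written from $Q_\emph{GRL}$ exactly once --- on the first visit to $(s,a)$, hence before its first TD update --- and is thereafter modified only by the TD$(0)$ rule; the meta-level updates to $Q_\emph{GRL}$ in Lines $13$--$16$ never overwrite an already-materialized concrete entry, they only change the (still bounded) value with which some as-yet-unseen pair will later be initialized. Viewed at the level of the concrete Q-table, the run is therefore tabular Q-learning on $M$ started from an initial table that is revealed gradually, adaptively, and randomly. The main point to get right is that this initial table is not fixed in advance: the resolution is that under infinite visitation in the finite MDP $M$ there is an almost surely finite (random) time $\tau$ by which every pair has been materialized; at time $\tau$ the Q-table is some uniformly bounded random vector, and restarting the classical analysis from $\tau$ --- the tails of the step-size series still satisfy the Robbins--Monro conditions and visitation is still infinite --- gives almost sure convergence to $q_*$, since the stochastic-approximation argument tolerates a bounded (even random) initial condition. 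I expect this restart step, rather than any of the preceding algebra, to be the only place that genuinely requires care.
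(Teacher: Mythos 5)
Your reduction is exactly the one the paper uses: initialize the tabular $Q$ from the (bounded) network outputs, observe that Lines $9$--$12$ are then unmodified tabular Q-learning on the finite MDP $M$, and invoke the classical convergence theorem, whose limit is independent of any bounded initialization. The paper's proof is only this two-sentence sketch, so your extra care --- verifying boundedness of the finitely many $Q_{\emph{GRL}}(\overline{s},\overline{a})$ values and handling the lazy-evaluation variant via the almost-surely-finite materialization time $\tau$ --- goes beyond what the paper checks, but it is consistent with the paper's argument and, if anything, closes the one gap the sketch leaves open for the optimized implementation.
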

\begin{proof}[Proof (Sketch)]
The proof is based on the following intuition. $Q_{\emph{GRL}}$ is used to initialize every $Q(s, a)$ entry of $M$ exactly once after which Q-learning operates as normal. The rest of the proof follows from the proof of convergence for Q-learning \cite{DBLP:books/lib/SuttonB98}.
\end{proof}

\begin{algorithm}[t]

\begin{algorithmic}[1]
\REQUIRE MDP $M$, GRL network $Q_{\emph{GRL}}$, features $F$,\\epsilon $\epsilon$, learning rate $\alpha$

\STATE $Q \gets $ initializeEmptyTable$()$
\FOR {$s \in S, a \in A$}
    \STATE $\overline{s}, \overline{a} \gets \text{abstraction}(F, s, a)$
    \STATE $Q(s, a) = Q_\emph{GRL}(\overline{s}, \overline{a})$
\ENDFOR
\STATE $\mathbb{B} \gets $ \text{initialize replay buffer}
\STATE $s \gets s_0$
\WHILE{stopping criteria not met}

    \STATE $a \gets $ getEpsilonGreedyAction$(s)$
    \STATE $s', r \gets $ executeAction$(s, a)$
    \STATE $\delta = r + \gamma \max\limits_{a' \in A} Q(s', a') - Q(s, a) $
    \STATE $Q(s, a) = Q(s, a)  + \alpha \delta$
    \STATE $\overline{s}, \overline{a} \gets $ \text{abstraction}$(F, s, a)$
    \STATE \text{Add} $(\overline{s}, \overline{a}, Q(s, a))$ to $\mathbb{B}$
    
    \STATE Sample a mini-batch $B$ from $\mathbb{B}$
    \STATE Train $Q_{\emph{GRL}}$ using $B$
    \STATE $s' \gets s$
\ENDWHILE

\RETURN $Q, Q_{\emph{GRL}}$
\end{algorithmic}

\caption{Generalized Reinforcement Learning (GRL)}
\label{alg:grl}
\end{algorithm}

\subsection{Scaling Up Q-learning and Transfer}
\label{subsec:leapfrogging}
Transfer capabilities can often be improved if the training strategy uses a curriculum that organizes the tasks presented to the learner
in increasing order of difficulty \cite{DBLP:conf/icml/BengioLCW09}. However, the burden of segregating tasks in order of difficulty often
falls upon a domain expert. 

We adopt \emph{leapfrogging} \cite{DBLP:conf/aips/GroshevGTSA18,DBLP:conf/aaai/KariaS21}, an approach that follows the ``learning-from-small-examples" paradigm by using a problem generator to automatically create a curriculum for learning. Leapfrogging is an iterative process for speeding up learning when used in conjunction with a transfer learning algorithm such as GRL. Leapfrogging is analogous to a loose curriculum, enabling self-supervised training in contrast to curriculum learning that
does not enable automatic self-training.

Leapfrogging operates by initially generating a small problem $M_\emph{small}$ that can be easily solved by vanilla Q-learning without any transfer. It applies GRL (Alg.\,\ref{alg:grl}) to this problem using an uninitialized $Q_{\emph{GRL}}$ network. Once this problem is solved, leapfrogging generates a slightly larger problem and invokes GRL again. The $Q_{\emph{GRL}}$ network from the previous iteration allows GRL to utilize knowledge transfer to solve this new problem relatively quickly while also improving the generalization capabilities of the next generation $Q_{\emph{GRL}}$ network.

\section{Empirical Evaluation}
\label{sec:empirical_evaluation}
We performed an empirical evaluation on four different tasks and our results show that GRL outperforms the baseline in zero-shot transfer performance. We also show that GRL is competitive with approaches receiving additional information like closed-form action models. We now describe the evaluation methodology used for assessing these hypotheses.

\begin{figure*}[ht]
    \centering
    \includegraphics[width=\linewidth]{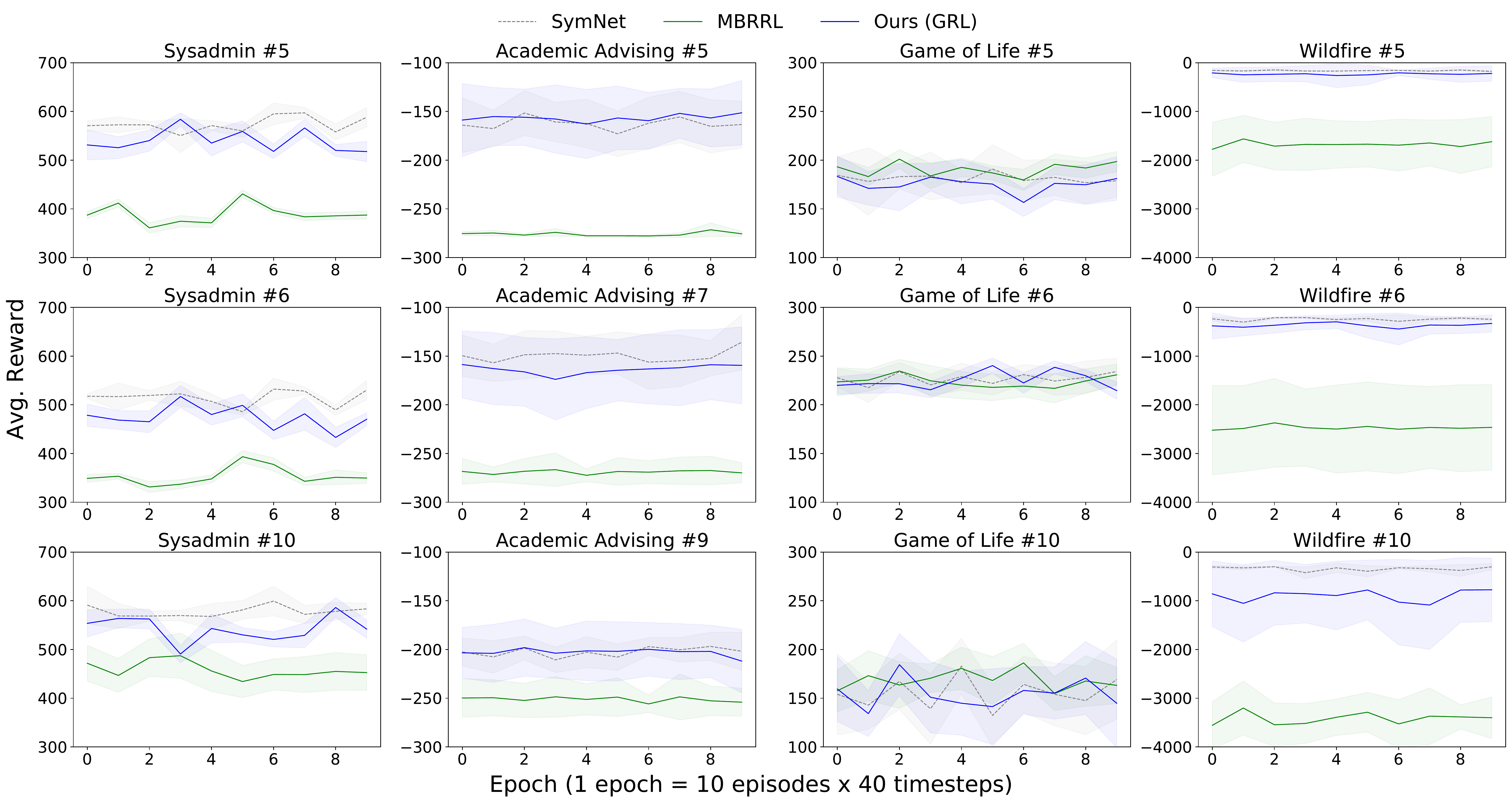}

    \caption{\small Zero-shot transfer performance of GRL (higher values better) compared to MBRRL. The problem number refers to the instance number in the IPPC problem collection. We interpolate the data between any two epochs. We also compare with SymNet, an approach that needs closed-form action models, and thus, is not applicable in our setting. As such, we plot SymNet results as greyed out. We assume minimum reward is obtained when the Q-values are set to NaN as was the case for some problems in Academic Advising for MBRRL. For Academic Advising, we report instances 5, 7 and 9 since instances 6 and 10 contained settings that were incompatible with our system.}
    \label{fig:results}
\end{figure*}

We ran our experiments utilizing a single core and 16 GiB of memory on an Intel Xeon E5-2680 v4 CPU containing 28 cores and 128 GiB of RAM. 

We used the network architecture from Fig.\,\ref{fig:network_architecture} for all of our experiments. Our system is implemented in Python and we used PyTorch \cite{NEURIPS2019_9015} with default implementations of mean squared error (MSE) as the loss function and Adam \cite{DBLP:journals/corr/KingmaB14} as the optimization algorithm for training each domain-specific $Q_{\emph{GRL}}$ network.

Our system uses RDDLsim as the simulator, and thus, accepts problems written in a subset of the Relational Dynamic Influence Definition Language (RDDL) \cite{Sanner:RDDL} .

\subsection{Baselines}
\label{subsec:baselines}

As our baseline, we compare our approach with a first-order Q-function approximation-based approach for transfer; MBRRL \cite{DBLP:conf/aips/NgP21}. We also compare our approach with SymNet \cite{DBLP:conf/icml/GargBM20}, an approach that requires closed-form action models, information that is unavailable to MBRRL and GRL in our setting.\footnote{We thank the authors of SymNet and MBRRL for help in setting up and using their source code.}

MBRRL computes first-order abstractions using conjunctive sets of features and learns a linear first-order approximation of the Q-function over these features. They employ ``mixed approximation" wherein both the concrete $Q(s, a)$ values as well as the approximated Q-values are used to select actions for the policy. 

SymNet uses a Graph Neural Network (GNN) representation of a parsed Dynamic Bayes Network (DBN) for a problem. SymNet thus has access to additional domain knowledge in the form of closed-form action models, and as a result, it is not directly applicable in the RL setting that we consider. Nevertheless, it can serve as a good indicator of the transfer capabilities of GRL that does not need such closed-form representations of action models. We also tried modifying TraPSNet \cite{DBLP:conf/aips/GargBM19}, a precursor of SymNet that does not require action-models, but could not run it due to the limited support for the domains we considered.

\subsection{Tasks, Training and Test Setup}
\label{subsec:tasks}

We consider tasks used in the International Probablistic Planning Competition (IPPC) \cite{Sanner:IPPC}, which have been used by SymNet and MBRRL as benchmarks for evaluating transfer performance. 

Sysadmin: SYS$(n)$ is the Sysadmin domain that was described earlier in the paper. Academic Advising: AA$(l, c, p)$ is a domain where the objective is to pass a certain number of levels $l$ containing $c$ courses each of which need $p$ pre-requisites to be passed first. Game of Life: GoL$(x, y)$ is John Conway's Game of Life environment on a grid of size $x \times y$ \cite{DBLP:journals/scholarpedia/IzhikevichCS15}. Wildfire: WF$(x, y)$ is an environment set in a grid of size $x \times y$. Cells have a chance to burn which also influences their neighbor's probability of burning.





The original versions of Game of Life and Wildfire contain 4-ary predicates that we
converted to an equivalent binary version by converting predicates like \emph{neighbor}$(x_1,y_1,x_1,y_2)$ to \emph{neighbor}$(l_{11},l_{12})$ for use in our evaluation. 

\textbf{Training} For SymNet, we utilized the same problems (IPPC instances 1, 2, and 3) for training as published by the authors. We trained each problem for 1250 episodes. For MBRRL, we utilized the same training procedure as that of the authors wherein we used IPPC instance \#3 for training. We trained each problem for 3750 episodes using Q-learning with an initial $\epsilon = 1$ and a decay rate of $0.997$.

For our leapfrogging approach with GRL, we used a problem generator to generate the problems for training. We used SYS$(3)$, SYS$(4)$, SYS$(6)$; AA$(2, 2, 2)$, AA$(3, 3, 3)$, AA$(4, 4, 4)$; GoL$(2, 2)$, GoL$(3, 3)$, GoL$(4, 4)$, and WF$(2, 2)$, WF$(3, 3)$, WF$(4, 4)$ for training Sysadmin, Academic Advising, Game of Life, and Wildfire respectively. We trained each problem for 1250 episodes using GRL .

\textbf{Testing} We used the same set of problems as SymNet; instances $5-10$ from the IPPC problem collection. The state spaces of these problems are much larger than the training problems used by GRL. For example, the state space size of SYS$(n)$ is $2^n$. For training, the largest problem used by GRL was SYS$(6)$, whereas the test problem, instance 10 (Sysadmin \#10) of the IPPC is SYS$(50)$. Due to space limitations, we report results obtained on instances 5, 6, and 10 and include the others in the supplemental information.

\textbf{Hyperparameters} We used the IPPC horizon $H$ of 40 timesteps after which the simulator was reset to the initial state. 
For GRL, we used Q-learning with $\epsilon = 0.1$.  To train a $Q_{\emph{GRL}}$ network, we used a replay buffer of size $20000$, a mini-batch size of 32, and a training interval of 32 timesteps with 25 steps of optimization per interval. For MBRRL and GRL, we used $\gamma = 0.9$ and $\alpha = 0.05$ for Sysadmin and Game of Life domains as they provide positive rewards. For Academic Advising and Wildfire, we used $\gamma = 1.0$ and $\alpha = 0.3$.

For MBRRL and SymNet, we used the default values of all other settings like network architecture, feature discovery threshold etc. that were published by the authors. 

\textbf{Evaluation Metric} To showcase the efficacy of transfer learning, our evaluation metric compares the performance of MBRRL and our approach after zero-shot transfer.  We freeze the policy after training, transfer it to the test instances and run it greedily for 100 episodes. We report our results using mean and standard deviation metrics computed using 10 individual runs of training and testing.

\textbf{Feature Generation} We used the D2L system \cite{DBLP:conf/aaai/FrancesBG21}, that \emph{requires} a sampled state space as input, for generating description logic features. We modified their code to not require action models and set a complexity bound of $k=5$ for feature generation. We sampled the state space from the first problem used by GRL for training per domain and provided it to D2L for generating the feature set $F_\emph{DL}$.

\subsection{Analysis of Results}
\label{subsec:analysis_of_results}
Our results are shown in Fig.\,\ref{fig:results}. From the results it is easy to see that GRL has excellent zero-shot transfer capabilities and can easily outperform or remain competitive with both MBRRL and SymNet. We now present our analysis followed by a brief discussion of some limitations and future work.

\textbf{Comparison with MBRRL} Our approach is able to outperform MBRRL significantly on Sysadmin, Academic Advising and Wildfire. The DL abstractions used by the GRL
features are more expressive than the conjunctive first-order features used by MBRRL, allowing GRL to learn more expressive policies. Additionally, leapfrogging allows scaling up training and learning of better policies in the same number of episodes in contrast to using a fixed instance for training. 

\textbf{Comparison with SymNet} SymNet utilizes significant domain knowledge. Edges are added between two nodes if an action affects them. This information is unavailable when just observing states as sets of predicates. It is impressive that despite not using such knowledge, GRL is able to remain competitive with SymNet in most of the problems.

\subsection{Limitations and Future Work}
\label{subsec:limitations}

In the Sysadmin domain, the probability with which a computer shuts down depends on how many computers it is connected to. Our representation of $o \in \phi_f(s)$ for representing the action vectors cannot capture such dependencies. However, this is fairly easy to mitigate by a new feature that can count how many computers a specific computer is connected to. We plan to investigate the automatic generation and use of such features in future work.

Leapfrogging requires an input list of object counts for the problem generator that we hand-coded in our experiments. However, we believe that our approach is a step forward in curriculum design by relieving the designer from knowing intrinsic details about the domain, which is often a prerequisite for assessing the difficulty of tasks. The lack of a problem generator can be mitigated by combining leapfrogging with techniques that sample ``subgoals" \cite{DBLP:journals/jair/FernYG06,DBLP:conf/nips/AndrychowiczCRS17} and utilizing GRL to solve the problem and any subsequent problems.

\section{Related Work}
\label{sec:related_work}
Our work adds to the vast body of literature on learning in relational domains. Several of these approaches \cite{DBLP:journals/ai/Khardon99,DBLP:conf/ijcai/GuestrinKGK03,DBLP:conf/aips/WuG07,DBLP:conf/icml/GargBM20} assume that action models are available in an analytical form and thus are not directly applicable to RL settings. For example, FOALP \cite{DBLP:conf/uai/SannerB05} learns features for approximating the value function by regressing over action models. D2L \cite{DBLP:conf/aaai/BonetFG19,DBLP:conf/aaai/FrancesBG21} learns abstract policies assuming an action model where actions can increment or decrement features.  We focus our discussion on relational RL (see \citet{DBLP:conf/nips/TadepalliGD04} for an overview).

\textbf{Q-estimation Approaches} Q-RRL \cite{DBLP:journals/ml/DzeroskiRD01} learns
an approximation of the Q-function by using logical regression trees. GBQL \cite{DBLP:journals/corr/srijata} learns a gradient-boosted tree representation of the Q-function. They evaluated their approach on relatively simple and deterministic tasks, demonstrating the difficulty of transfer using a tree-based approach. Our evaluation consists of harder and stochastic tasks and shows that GRL can learn good policies. 

RePReL \cite{DBLP:conf/aips/KokelMNRT21} uses a high-level planner to learn abstractions for transfer learning. \citet{DBLP:conf/atal/RosenfeldTK17} use hand-crafted features and similarity functions to speed up Q-learning.
MBRRL \cite{DBLP:conf/aips/NgP21} learn conjunctive first-order features for Q-function approximation using hand-coded contextual information for improved performance. 
GRL does not require any hand-coded knowledge.

\textbf{Policy-based approaches} 
\citet{DBLP:journals/jair/FernYG06} use taxonomic syntax with beam search and approximate policy iteration to learn decision-list policies. Their approach uses rollout for estimating Q-values and as such cannot be applied in ``pure" RL settings whereas GRL can. \citet{DBLP:journals/corr/janisch} use graph neural network (GNN) representations of the state to compute policies. GNNs are reliant on the network's receptive field unlike $Q_{\emph{GRL}}$ which uses multilayer perceptrons (MLPs). TraPSNet \cite{DBLP:conf/aips/GargBM19} also uses a GNN and is limited to domains with a single binary predicate and actions with a single parameter. GRL can be used in domains with any number of action parameters and binary predicates.

\textbf{Automatic Curriculum Generation} \citet{DBLP:journals/jair/FernYG06} sample goals from random walks on a single problem. Their approach relies on the target problem to adequately represent the goal distribution for generalization. Similar ideas are explored in \citet{DBLP:conf/aips/FerberHH20} and \citet{DBLP:conf/nips/AndrychowiczCRS17}. These techniques are \emph{intra-instance}, sampling different goals from the same state space and are orthogonal to GRL, that addresses \emph{inter-instance} transfer. 
Our approach of leapfrogging is most similar to that of \citet{DBLP:conf/aips/GroshevGTSA18} and the learn-from-small-examples approach of \citet{DBLP:conf/aips/WuG07}.
We extend their ideas to RL settings and demonstrate its efficacy in transfer. \citet{DBLP:conf/atal/NarvekarS19} automatically generate a curriculum for tasks using hand-crafted features and thus are not applicable in our setting.

\section{Conclusion}
\label{sec:conclusion}
We presented an approach for reinforcement learning in relational domains that can learn good policies with effective zero-shot transfer capabilities.
 Our results show that Description Logic (DL) based features acquired simply through state trajectory sequences can serve to play the role of analytical (closed form) models or make up for some of the performance gap left in their absence. In the future, we plan
to investigate improving the features so that abstract actions can also take into account relationships between the instantiated parameters and the abstract state.

\bibliographystyle{named}
\bibliography{ijcai22}

\begin{thebibliography}{}

\bibitem[\protect\citeauthoryear{Andrychowicz \bgroup \em et al.\egroup
  }{2017}]{DBLP:conf/nips/AndrychowiczCRS17}
M.~Andrychowicz, D.~Crow, A.~Ray, J.~Schneider, R.~Fong, P.~Welinder,
  B.~McGrew, J.~Tobin, P.~Abbeel, and W.~Zaremba.
\newblock Hindsight experience replay.
\newblock In {\em {NeurIPS}}, 2017.

\bibitem[\protect\citeauthoryear{Baader \bgroup \em et al.\egroup
  }{2017}]{DBLP:books/daglib/dlbook}
F.~Baader, I.~Horrocks, C.~Lutz, and U.~Sattler.
\newblock {\em An Introduction to Description Logic}.
\newblock Cambridge University Press, 2017.

\bibitem[\protect\citeauthoryear{Bengio \bgroup \em et al.\egroup
  }{2009}]{DBLP:conf/icml/BengioLCW09}
Y.~Bengio, J.~Louradour, R.~Collobert, and J.~Weston.
\newblock Curriculum learning.
\newblock In {\em {ICML}}, 2009.

\bibitem[\protect\citeauthoryear{Bonet \bgroup \em et al.\egroup
  }{2019}]{DBLP:conf/aaai/BonetFG19}
B.~Bonet, G.~Franc{\`{e}}s, and H.~Geffner.
\newblock Learning features and abstract actions for computing generalized
  plans.
\newblock In {\em {AAAI}}, 2019.

\bibitem[\protect\citeauthoryear{Das \bgroup \em et al.\egroup
  }{2020}]{DBLP:journals/corr/srijata}
S.~Das, S.~Natarajan, K.~Roy, R.~Parr, and K.~Kersting.
\newblock Fitted q-learning for relational domains.
\newblock {\em arXiv}, abs/2006.05595, 2020.

\bibitem[\protect\citeauthoryear{Dzeroski \bgroup \em et al.\egroup
  }{2001}]{DBLP:journals/ml/DzeroskiRD01}
S.~Dzeroski, L.~D. Raedt, and K.~Driessens.
\newblock Relational reinforcement learning.
\newblock {\em Mach. Learn.}, 43(1/2):7--52, 2001.

\bibitem[\protect\citeauthoryear{Ferber \bgroup \em et al.\egroup
  }{2020}]{DBLP:conf/aips/FerberHH20}
P.~Ferber, M.~Helmert, and J.~Hoffmann.
\newblock Reinforcement learning for planning heuristics.
\newblock In {\em {ICAPS} PRL workshop}, 2020.

\bibitem[\protect\citeauthoryear{Fern \bgroup \em et al.\egroup
  }{2006}]{DBLP:journals/jair/FernYG06}
A.~Fern, S.~W. Yoon, and R.~Givan.
\newblock Approximate policy iteration with a policy language bias: Solving
  relational markov decision processes.
\newblock {\em J. Artif. Intell. Res.}, 25:75--118, 2006.

\bibitem[\protect\citeauthoryear{Franc{\`{e}}s \bgroup \em et al.\egroup
  }{2021}]{DBLP:conf/aaai/FrancesBG21}
G.~Franc{\`{e}}s, B.~Bonet, and H.~Geffner.
\newblock Learning general planning policies from small examples without
  supervision.
\newblock In {\em {AAAI}}, 2021.

\bibitem[\protect\citeauthoryear{Garg \bgroup \em et al.\egroup
  }{2019}]{DBLP:conf/aips/GargBM19}
S.~Garg, A.~Bajpai, and Mausam.
\newblock Size independent neural transfer for {RDDL} planning.
\newblock In {\em {ICAPS}}, 2019.

\bibitem[\protect\citeauthoryear{Garg \bgroup \em et al.\egroup
  }{2020}]{DBLP:conf/icml/GargBM20}
S.~Garg, A.~Bajpai, and Mausam.
\newblock Symbolic network: Generalized neural policies for relational mdps.
\newblock In {\em {ICML}}, 2020.

\bibitem[\protect\citeauthoryear{Groshev \bgroup \em et al.\egroup
  }{2018}]{DBLP:conf/aips/GroshevGTSA18}
E.~Groshev, M.~Goldstein, A.~Tamar, S.~Srivastava, and P.~Abbeel.
\newblock Learning generalized reactive policies using deep neural networks.
\newblock In {\em {ICAPS}}, 2018.

\bibitem[\protect\citeauthoryear{Guestrin \bgroup \em et al.\egroup
  }{2003}]{DBLP:conf/ijcai/GuestrinKGK03}
C.~Guestrin, D.~Koller, C.~Gearhart, and N.~Kanodia.
\newblock Generalizing plans to new environments in relational mdps.
\newblock In {\em {IJCAI}}, 2003.

\bibitem[\protect\citeauthoryear{Izhikevich \bgroup \em et al.\egroup
  }{2015}]{DBLP:journals/scholarpedia/IzhikevichCS15}
E.~M. Izhikevich, J.~H. Conway, and A.~Seth.
\newblock Game of life.
\newblock {\em Scholarpedia}, 10(6):1816, 2015.

\bibitem[\protect\citeauthoryear{Janisch \bgroup \em et al.\egroup
  }{2020}]{DBLP:journals/corr/janisch}
J.~Janisch, T.~Pevn{\'{y}}, and V.~Lis{\'{y}}.
\newblock Symbolic relational deep reinforcement learning based on graph neural
  networks.
\newblock {\em arXiv}, abs/2009.12462, 2020.

\bibitem[\protect\citeauthoryear{Karia and
  Srivastava}{2021}]{DBLP:conf/aaai/KariaS21}
R.~Karia and S.~Srivastava.
\newblock Learning generalized relational heuristic networks for model-agnostic
  planning.
\newblock In {\em {AAAI}}, 2021.

\bibitem[\protect\citeauthoryear{Khardon}{1999}]{DBLP:journals/ai/Khardon99}
R.~Khardon.
\newblock Learning action strategies for planning domains.
\newblock {\em Artif. Intell.}, 113(1-2):125--148, 1999.

\bibitem[\protect\citeauthoryear{Kingma and
  Ba}{2015}]{DBLP:journals/corr/KingmaB14}
D.~P. Kingma and J.~Ba.
\newblock Adam: {A} method for stochastic optimization.
\newblock In {\em {ICLR}}, 2015.

\bibitem[\protect\citeauthoryear{Kokel \bgroup \em et al.\egroup
  }{2021}]{DBLP:conf/aips/KokelMNRT21}
H.~Kokel, A.~Manoharan, S.~Natarajan, B.~Ravindran, and P.~Tadepalli.
\newblock Reprel: Integrating relational planning and reinforcement learning
  for effective abstraction.
\newblock In {\em {ICAPS}}, 2021.

\bibitem[\protect\citeauthoryear{Long and
  Fox}{2003}]{DBLP:journals/jair/LongF03}
D.~Long and M.~Fox.
\newblock The 3rd international planning competition: Results and analysis.
\newblock {\em J. Artif. Intell. Res.}, 20:1--59, 2003.

\bibitem[\protect\citeauthoryear{Mart{\'{\i}}n and
  Geffner}{2004}]{DBLP:journals/apin/MartinG04}
M.~Mart{\'{\i}}n and H.~Geffner.
\newblock Learning generalized policies from planning examples using concept
  languages.
\newblock {\em Appl. Intell.}, 20(1):9--19, 2004.

\bibitem[\protect\citeauthoryear{Mnih \bgroup \em et al.\egroup
  }{2013}]{DBLP:journals/corr/MnihKSGAWR13}
V.~Mnih, K.~Kavukcuoglu, D.~Silver, A.~Graves, I.~Antonoglou, D.~Wierstra, and
  M.~A. Riedmiller.
\newblock Playing atari with deep reinforcement learning.
\newblock {\em arXiv}, abs/1312.5602, 2013.

\bibitem[\protect\citeauthoryear{Narvekar and
  Stone}{2019}]{DBLP:conf/atal/NarvekarS19}
S.~Narvekar and P.~Stone.
\newblock Learning curriculum policies for reinforcement learning.
\newblock In {\em {AAMAS}}, 2019.

\bibitem[\protect\citeauthoryear{Ng and Petrick}{2021}]{DBLP:conf/aips/NgP21}
J.~H.~A. Ng and R.~Petrick.
\newblock First-order function approximation for transfer learning in
  relational mdps.
\newblock In {\em {ICAPS} PRL workshop}, 2021.

\bibitem[\protect\citeauthoryear{Paszke \bgroup \em et al.\egroup
  }{2019}]{NEURIPS2019_9015}
A.~Paszke, S.~Gross, F.~Massa, A.~Lerer, J.~Bradbury, G.~Chanan, T.~Killeen,
  Z.~Lin, N.~Gimelshein, L.~Antiga, A.~Desmaison, A.~Kopf, E.~Yang, Z.~DeVito,
  M.~Raison, A.~Tejani, S.~Chilamkurthy, B.~Steiner, L.~Fang, J.~Bai, and
  S.~Chintala.
\newblock Pytorch: An imperative style, high-performance deep learning library.
\newblock In {\em {NeurIPS}}, 2019.

\bibitem[\protect\citeauthoryear{Rosenfeld \bgroup \em et al.\egroup
  }{2017}]{DBLP:conf/atal/RosenfeldTK17}
A.~Rosenfeld, M.~E. Taylor, and S.~Kraus.
\newblock Speeding up tabular reinforcement learning using state-action
  similarities.
\newblock In {\em {AAMAS}}, 2017.

\bibitem[\protect\citeauthoryear{Sanner and
  Boutilier}{2005}]{DBLP:conf/uai/SannerB05}
S.~Sanner and C.~Boutilier.
\newblock Approximate linear programming for first-order mdps.
\newblock In {\em {UAI}}, 2005.

\bibitem[\protect\citeauthoryear{Sanner}{2010}]{Sanner:RDDL}
S.~Sanner.
\newblock Relational dynamic influence diagram language ({RDDL}): Language
  description.
\newblock \url{http://users.cecs.anu.edu.au/~ssanner/IPPC_2011/RDDL.pdf}, 2010.

\bibitem[\protect\citeauthoryear{Sanner}{2014}]{Sanner:IPPC}
S.~Sanner.
\newblock The international probabilistic planning competition.
\newblock \url{http://users.cecs.anu.edu.au/~ssanner/IPPC_2014/index.html},
  2014.

\bibitem[\protect\citeauthoryear{Srivastava \bgroup \em et al.\egroup
  }{2011}]{DBLP:conf/aaai/SrivastavaZIG11}
Siddharth Srivastava, Shlomo Zilberstein, Neil Immerman, and Hector Geffner.
\newblock Qualitative numeric planning.
\newblock In {\em {AAAI}}, 2011.

\bibitem[\protect\citeauthoryear{Srivastava \bgroup \em et al.\egroup
  }{2012}]{DBLP:journals/ai/SrivastavaIZ12}
Siddharth Srivastava, Neil Immerman, and Shlomo Zilberstein.
\newblock Applicability conditions for plans with loops: Computability results
  and algorithms.
\newblock {\em Artif. Intell.}, 191-192:1--19, 2012.

\bibitem[\protect\citeauthoryear{Srivastava \bgroup \em et al.\egroup
  }{2015}]{DBLP:conf/aaai/SrivastavaZGAR15}
Siddharth Srivastava, Shlomo Zilberstein, Abhishek Gupta, Pieter Abbeel, and
  Stuart Russell.
\newblock Tractability of planning with loops.
\newblock In {\em {AAAI}}, 2015.

\bibitem[\protect\citeauthoryear{Sutton and
  Barto}{1998}]{DBLP:books/lib/SuttonB98}
R.~S. Sutton and A.~G. Barto.
\newblock {\em Reinforcement learning - an introduction}.
\newblock {MIT} Press, 1998.

\bibitem[\protect\citeauthoryear{Tadepalli \bgroup \em et al.\egroup
  }{2004}]{DBLP:conf/nips/TadepalliGD04}
P.~Tadepalli, R.~Givan, and K.~Driessens.
\newblock Relational reinforcement learning: An overiew.
\newblock In {\em {ICML} RRL workshop}, 2004.

\bibitem[\protect\citeauthoryear{Watkins}{1989}]{Watkins:1989}
C.~J. C.~H. Watkins.
\newblock {\em Learning from Delayed Rewards}.
\newblock PhD thesis, King's College, Cambridge, UK, 1989.

\bibitem[\protect\citeauthoryear{Wu and Givan}{2007}]{DBLP:conf/aips/WuG07}
J.~Wu and R.~Givan.
\newblock Discovering relational domain features for probabilistic planning.
\newblock In {\em {ICAPS}}, 2007.

\end{thebibliography}

\clearpage
\appendix

\section{Extended Results}
The complete set of results obtained on instances $5-10$ for each domain of the IPPC can be found in Fig.\,\ref{fig:extended_results}.
For Academic Advising, instances 6, 8, and 10 used a setting that allow 2 concurrent actions to be executed at each timestep.
This setting was incompatible with MBRRL and our software and as such those instances could not be run.

\begin{figure*}[ht]
    \centering
    \includegraphics[width=\linewidth]{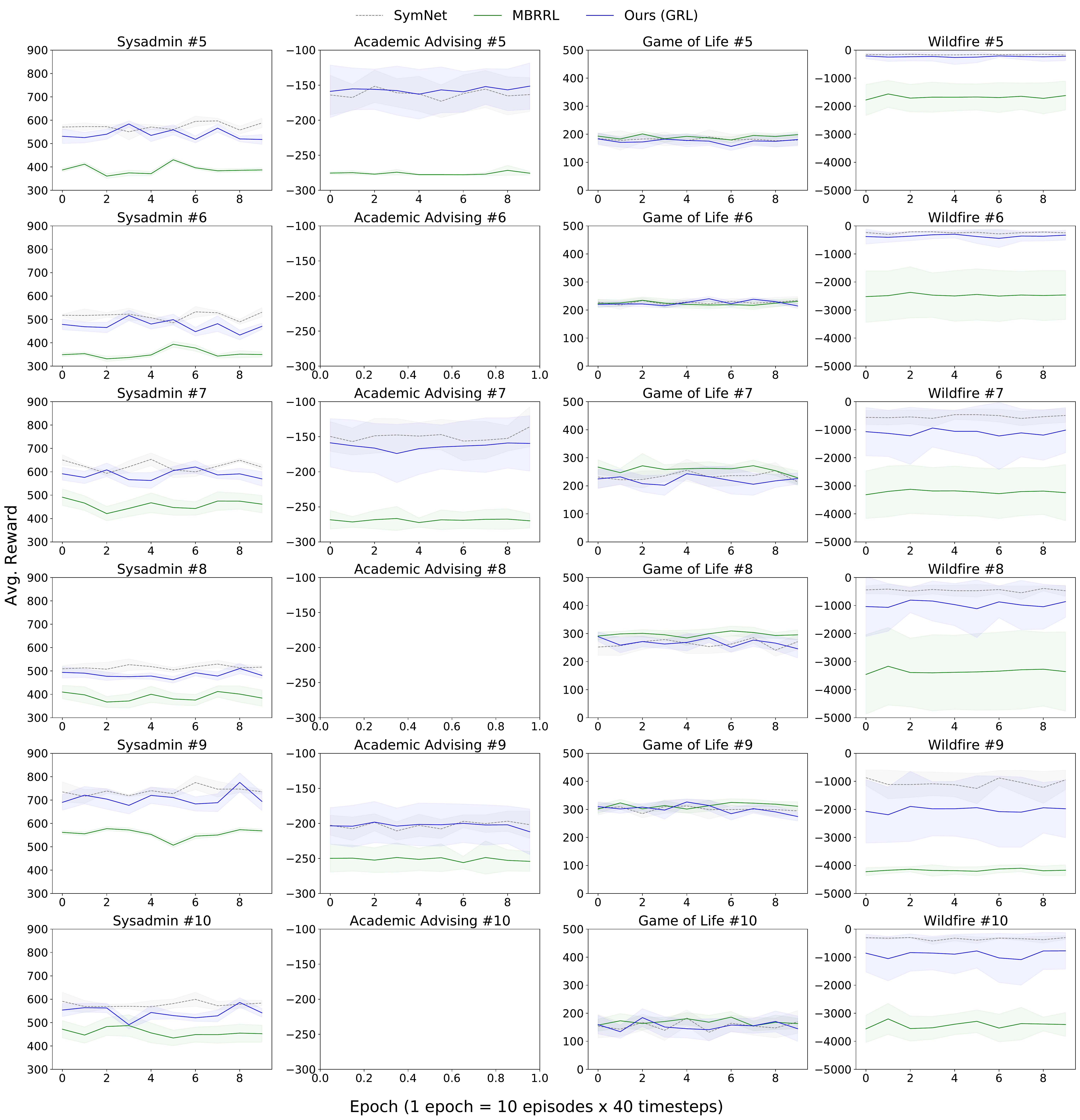}

    \caption{\small Zero-shot transfer results on instances $5-10$ of the IPPC. Even numbered instances of Academic Advising contained settings that were incompatible with MBRRL and GRL and as such could not be run.}
    \label{fig:extended_results}
\end{figure*}

\end{document}